\documentclass[runningheads]{llncs}
\usepackage[T1]{fontenc}
\usepackage{graphicx}
\usepackage{hyperref}
\usepackage{color}

\usepackage[square,sort,comma,numbers]{natbib}
\usepackage{caption}
\usepackage{booktabs}

\usepackage{amsmath}

\usepackage{algorithm}
\usepackage[noend]{algpseudocode}
\MakeRobust{\Call}

\algrenewcommand\algorithmicrequire{\textbf{Input:}}
\algrenewcommand\algorithmicensure{\textbf{Output:}}

\usepackage{subcaption}

\usepackage[printonlyused]{acronym}
\acrodef{BNC}{Bayesian network classifier}
\acrodef{OBDD}{ordered binary decision diagram}
\acrodef{NNF}{negation normal form}
\acrodef{DAG}{directed acyclic graph}
\acrodef{SR}{sufficient reason}
\acrodef{NR}{necessary reason}
\acrodef{GSR}{general sufficient reason}
\acrodef{GNR}{general necessary reason}

\usepackage[noabbrev, capitalise]{cleveref}
\crefname{ALC@unique}{line}{lines}
\Crefname{ALC@unique}{Line}{Lines}

\usepackage[pdf]{graphviz}

\usepackage{xpatch}
\makeatletter
\newcommand*{\addFileDependency}[1]{
  \typeout{(#1)}
  \@addtofilelist{#1}
  \IfFileExists{#1}{}{\typeout{No file #1.}}
}
\makeatother
\xpretocmd{\digraph}{\addFileDependency{#2.dot}}{}{}

\usepackage{xcolor}

\newcommand\nil{\texttt{nil}}
\newcommand\true{\texttt{true}}
\newcommand\false{\texttt{false}}
\newcommand\dom{\texttt{dom}}
\newcommand{\ftw}[1]{\texttt{ftw}(#1)}

\newcommand\vNormal{\texttt{Normal}}
\newcommand\vHigh{\texttt{High}}
\newcommand\vLow{\texttt{Low}}

\newcommand\BP{\textsc{BP}}
\newcommand\HR{\textsc{HR}}
\newcommand\CT{\textsc{CT}}

\newcommand\I{\mathcal{I}}
\newcommand{\X}{\mathbf{X}}
\newcommand{\x}{\mathbf{x}}
\newcommand{\Sep}{\mathbf{S}}
\newcommand{\sep}{\mathbf{s}}
\newcommand{\U}{\mathbf{U}}
\newcommand{\V}{\mathbf{V}}
\newcommand{\Z}{\mathbf{Z}}
\newcommand{\pr}{\texttt{pr}}
\newcommand{\uu}{\mathbf{u}}
\newcommand{\vv}{\mathbf{v}}

\begin{document}
\title{Scaling the Explanation of Multi-Class Bayesian Network Classifiers}
%
%
\author{Yaofang Zhang\orcidID{0009-0001-2686-8954}
\and
Adnan Darwiche\orcidID{0000-0003-3976-6735}
}
\authorrunning{Y. Zhang, A. Darwiche}
%
\institute{
Department of Computer Science\\
University of California, Los Angeles\\
\email{\{vzhang,darwiche\}@cs.ucla.edu}}
\maketitle              
\begin{abstract}
We propose a new algorithm for compiling \acf{BNC} into \textit{class formulas}.
Class formulas are logical formulas that represent a classifier's input-output behavior, and are crucial in the recent line of work that uses logical reasoning to explain the decisions made by classifiers. 
Compared to prior work on compiling class formulas of \ac{BNC}s, our proposed algorithm is not restricted to binary classifiers, shows significant improvement in compilation time, and outputs class formulas as \acf{NNF} circuits that are \textit{OR-decomposable}, which is an important property when computing explanations of classifiers.
\keywords{Explainable AI  \and Bayesian Network Classifier \and Tractable Circuit.}
\end{abstract}

\section{Introduction}

As machine learning systems become more prevalent in everyday interactions, so has the interest of reasoning about the behavior of such systems. 
The decision of a (trained) classifier may be used to help doctors diagnose a medical condition; in such scenarios, it is critical to explain the reason behind the classifier's decision, e.g., what is the minimal change to the classifier input that will change the decision, and what are the minimal aspects of the input that guarantee this decision?
Recent works in explainable AI have proposed to answer such questions with hard guarantees by first constructing a symbolic model for the input-output behavior of a classifier, e.g.,~\cite{shih_symbolic_2018,Ignatiev_abductive_2019,Audemard_xai_compiled_2020}.\footnote{This is in contrast to the model-agnostic approach which treats a classifier as a black box that only can be queried~\cite{Ribeiro_lime_2016, Ribeiro_Anchors_2018}. The model-agnostic approach can scale better, but the generated explanations are approximate and lack hard guarantees~\cite{corr/Ignatiev_validate_2019}.}
In particular, this paper follows a line of work that proposed to efficiently generate explanation for classifier decisions by compiling the \textit{class formulas}, which represent the input-output behavior of classifiers, into \textit{tractable circuits}~\cite{shih_symbolic_2018, Audemard_xai_compiled_2020, darwiche_computation_2022}.
This paper will focus on compiling class formulas for one influential type of classifiers, the \acf{BNC}, which has been widely used in both the AI field and beyond \cite{friedman_bayesian_1997, ng_discriminative_2001, sachs_causal_2005, bielza_discrete_2014}.

Prior works have proposed algorithms for compiling the class formulas of naive Bayes \cite{chan_reason_bayes_2003,nips/marques2020naivebayes}, latent tree \cite{shih_symbolic_2018}, and general \ac{BNC} \cite{shih_compiling_2019}.
In prior works, the \acp{BNC} are assumed to be binary (i.e., two possible output classes), and the class formulas are compiled to \acp{OBDD}.
In this paper, we present a new algorithm for compiling general \acp{BNC} that: 
(1) generalizes from binary \acp{BNC} to multi-class \acp{BNC}, 
(2) shows significant improvement in compilation time, and
(3) outputs class formulas that are \textit{OR-decomposable,} which is an important property for the tractability of computing explanations~\cite{jair/DarwicheM21,darwiche_computation_2022}. 
In particular, if the instances in each class are represented by an OR-decomposable formula, complete reasons~\cite{ecai/DarwicheH20,darwiche_complete_2023} and general reasons~\cite{ji_new_2023} of decisions can be computed in linear time. 
These can then be converted into sufficient and necessary reasons which are perhaps the most common explanations for classifier decisions that are studied in the literature; see~\cite{lics/Darwiche23} for a tutorial/survey.

The rest of the paper is organized as follows. 
\cref{sec:bg} introduces the problem of compiling class formulas for \ac{BNC}s.
\cref{sec:ftree} introduces the feature tree, which is the central component for the compilation algorithm discussed in \cref{sec:algMain}.
Experimental results are presented in \cref{sec:results}.
\Cref{sec:multiclass} presents an example multi-class BNC, and illustrates explanations that can be efficiently generated using the output of the compilation algorithm.
Finally, \cref{sec:conclusion} concludes our discussion.

\section{Background}
\label{sec:bg}

In this section, we will first introduce the \acf{BNC} and the jointree method for probabilistic inference in Bayesian networks.
We then describe how the compiled class formulas can be used to explain the decisions of classifiers.
We will end with a summary of the \ac{BNC} compilation algorithm proposed in prior work.

\subsection{Bayesian Network Classifier}

A \ac{BNC} is a tuple $(\mathcal{N}, \X, Y)$, where $\mathcal{N}$ is a Bayesian network, $\X$ is the set of input feature variables, and $Y$ is the output variable that is the classification target.
In general, the network variables are discrete.
A discrete variable $Z$ has $k$ possible states/values $\{z_0, \dots, z_{k-1}\}$.
When variable $Z$ is binary, the two states are also commonly denoted as $\lnot Z$ and $Z$.
For some state $y_i$ of the target $Y$, an instance $\x$ is in class $y_i$ iff $i = \arg\max_{j} p(y_j|\x)$.
For class $y_i$, instance $\x$ is positive if $\x$ belongs to $y_i$; otherwise, $\x$ is negative. 

Following previous works, we assume that the classification target $Y$ is one of the root nodes in the network, and the input features are among the leaf nodes.
For our discussion, unless stated otherwise, all leaf variables of the network are chosen as features.\footnote{If a subset of leaves is chosen, we can iteratively remove other leaf nodes without impacting the classifier~\cite[Ch.~6]{darwiche_modeling_2009}.}
We will use the network in \cref{fig:examplebn} as a running example with $Y = A$ and $\X = \{D, E, F\}$.

\begin{figure}
\centering
\digraph[scale=0.3]{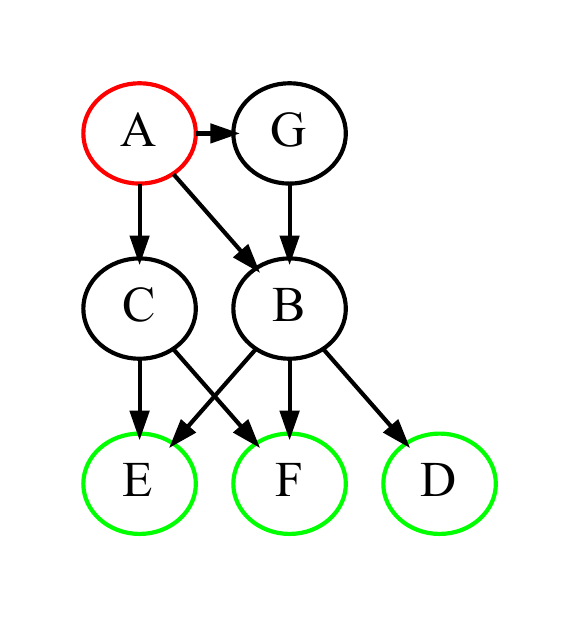}{
A -> G[penwidth=2]; A -> B[penwidth=2]; A -> C[penwidth=2];
B -> D[penwidth=2]; B -> E[penwidth=2]; B -> F[penwidth=2]; 
G -> B[penwidth=2]; C -> E[penwidth=2]; C -> F[penwidth=2];
{rank=same; A [color=red fontsize=24 penwidth=2]; G [fontsize=24 penwidth=2];}
{rank=same; B [fontsize=24 penwidth=2]; C [fontsize=24 penwidth=2];}
{rank=same; D [color=green fontsize=24 penwidth=2]; E [color=green fontsize=24 penwidth=2]; F [color=green fontsize=24 penwidth=2];}}
\caption{An example Bayesian network. The root variable A is in red, and the leaf variables D, E, F are in green. For simplicity, all variables in this network are binary.}
\label{fig:examplebn}
\end{figure}

The jointree algorithm is an influential method of probabilistic inference in Bayesian networks \cite{Lauritzen_jtree_1988, darwiche_modeling_2009}, and the jointree is also the basis for the feature tree that will be introduced in \cref{sec:ftree}.
\Cref{fig:ftree} shows a jointree for the network in \cref{fig:examplebn}.
Nodes in a jointree represent \textit{clusters}, which are sets of network variables, and each family (a variable and its parents) in the network must be a subset of some cluster in the jointree.
Also, if a variable appears in two clusters in a jointree, every cluster on the path between the two clusters must include the variable.
Each edge in a jointree is associated with a \textit{separator}, which is the intersection of the clusters of the two nodes that are connected by the edge.
Let $\Sep$ be the separator variables of an edge, variables $\Z$ be the union of the clusters that appear on one side of the edge, and $\Z'$ be the union of the clusters that appear on the other side.
Separator $\Sep$ \textit{d-separates} $\Z \setminus \Sep$ and $\Z' \setminus \Sep$; that is, given separator $\Sep$, the variables that only appear on one side of the edge are independent from the variables that only appear on the other side; see, e.g., \cite[Ch.~7]{darwiche_modeling_2009}.
The jointree algorithm manipulates factors over clusters and factors over separators.
A factor over a set of variables $\X$ is a mapping from each instantiation $\x$ to a non-negative number.
Factors can be used to represent probability distributions. 
Multiplying two factors creates a factor over the union of the variables in the two multiplicands. 
Projecting a factor to a variable set sums out all the variables that are not in the set.
The jointree algorithm can compute the marginals over all clusters, and its time complexity is exponential with respect to the jointree width (i.e., size of the largest cluster minus one). 

\subsection{Explaining Decisions using Class Formulas}

Even though \ac{BNC}s use probabilistic reasoning to classify feature instances, the input features $\X$ and the output target $Y$ are (symbolic) discrete variables. 
For a class $y_i$, the set of positive instances can be represented by a logical formula $\Pi_i$, which is called the \textit{class formula} of $y_i$.
The set of positive instances for class $y_i$ is exactly the set of models of 
$\Pi_i;$ that is, the instantiations $\x$ that satisfy formula $\Pi_i.$

The class formula is a key ingredient to explaining classifier decisions.
Continuing our example with the network in \cref{fig:examplebn}, with $Y = A$ and $\X = \{D, E, F\}$. 
Suppose an instance $\x = (D, \lnot E, \lnot F)$ is classified to $y_1$, and the class formula $\Pi_1$ is as shown in \cref{fig:nnf2}, so $\Pi_1=(D \lor \lnot E) \land (\lnot D \lor \lnot E \lor F)$.
The \textit{complete reason} for this decision is $(D \lor \lnot E) \land (\lnot E \lor F)$, 
which can be viewed as an abstraction of $\x$ showing why $\x$ belongs to the decided class~\cite{ecai/DarwicheH20,darwiche_complete_2023}.
The \acfp{NR} are the prime implicates of the complete reason, which are two in this case, $D \lor \lnot E$ and $\lnot E \lor F$.\footnote{The formulation of necessary reasons as the prime implicates of the complete reason is due to~\cite{darwiche_computation_2022}, which also showed that they correspond to contrastive explanations when the classifier is binary. Contrastive explanations were initially discussed in~\cite{Lipton_contrastive_1990} then formalized and popularized in~\cite{Ignatiev_Contrastive_2020}.}
A necessary reason points to a minimal modification to the instance, such that if a necessary reason is violated, the modified instance is not guaranteed to belong to the same class.
For example, if the values for variables $D$ and $E$ are flipped in $\x$, which violates a necessary reason ($D \lor \lnot E$), the modified instance $\x' = \{\lnot D, E, F\}$ is not in the class $y_1$.
This modification is minimal in the sense that it is impossible to change the decided class by changing the value of only variable $D$ or only variable $E$.
Furthermore, the \acfp{SR}, which are the prime implicants of the complete reason, correspond to the minimal aspects of the instance that guarantee/justify the decision.\footnote{The formulation of sufficient reasons as the prime implicants of the complete reason is due to~\cite{ecai/DarwicheH20}. Sufficient reasons were originally introduced under the name of PI-explanations in~\cite{shih_symbolic_2018} and were later referred to as abductive explanations in~\cite{Ignatiev_abductive_2019}.}
The sufficient reasons in this example are $\lnot E$ and $D \land F$. 
As long as one of the sufficient reasons holds, the decided class does not change no matter how the instance is modified.
These notions are further generalized to general reasons, \acfp{GSR}, and \acfp{GNR}~\cite{ji_new_2023}, which are more informative when the variables are non-binary.
In \cref{sec:multiclass}, we will discuss them in more depth in the context of a multi-class \ac{BNC} with discrete features.
Besides \acp{BNC}, this approach of explaining decisions using class formulas has also been applied to classifiers such as decision graphs~\cite{darwiche_computation_2022,ji_new_2023}, random forests~\cite{XAI26/jiRF}, and binary neural networks~\cite{KR20/shiCompileBNN}.

Efficiently computing the complete/general reason requires the class formula to satisfy certain properties.
We represent logic formulas as \ac{NNF} (circuits), which contain only conjunctions, disjunctions and negations with the latter appearing only next to variables (see \cref{fig:classFormulaNNF}).
An NNF is AND-decomposable iff for every conjunction $\land_{i} \alpha_i$ in the NNF, the conjuncts $\alpha_i$ do not share variables \cite{jacm/Darwiche01}.
\cref{fig:nnf1} shows an AND-decomposable NNF.
Similarly, an NNF is OR-decomposable iff for every disjunction $\lor_{i} \alpha_i$, the disjuncts $\alpha_i$ do not share variable; see
\cref{fig:nnf2}.
Recent works show that OR-decomposability allows one to efficiently compute the complete and general reasons for a decision \cite{jair/DarwicheM21,darwiche_computation_2022, ji_new_2023}, which are used to compute the (general) sufficient and necessary reasons for decisions.

\begin{figure}
\centering
\begin{subfigure}{0.48\textwidth}
\centering
\digraph[scale=0.25]{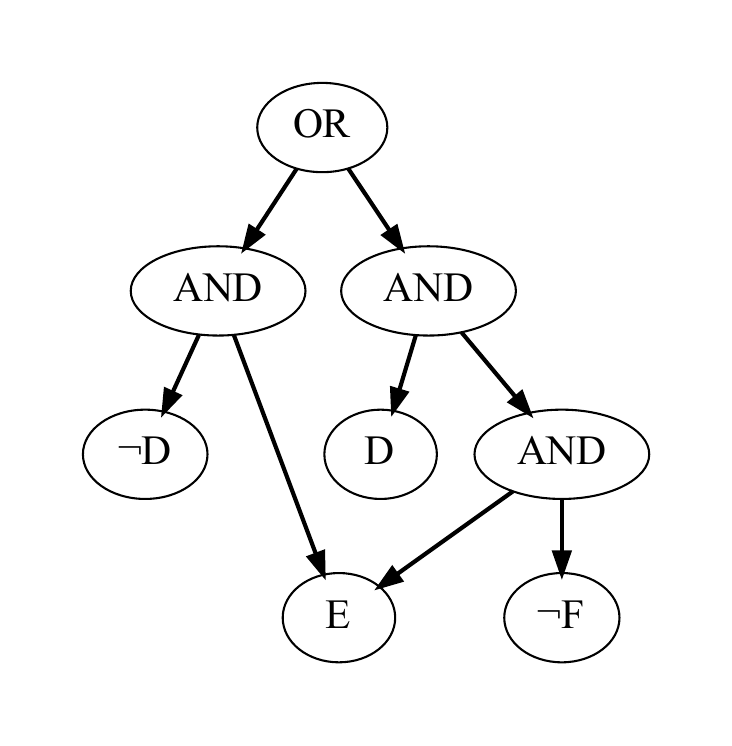}{
	11 [label=OR fontsize=20]
	11 -> 8 [style=solid penwidth=2.0]
	8 [label=AND fontsize=20]
	8 -> 2 [style=solid penwidth=2.0]
	2 [label="¬D" fontsize=20]
	8 -> 5 [style=solid penwidth=2.0]
	5 [label=E fontsize=20]
	11 -> 10 [style=solid penwidth=2.0]
	10 [label=AND fontsize=20]
	10 -> 3 [style=solid penwidth=2.0]
	3 [label=D fontsize=20]
	10 -> 9 [style=solid penwidth=2.0]
	9 [label=AND fontsize=20]
	9 -> 5 [style=solid penwidth=2.0]
	9 -> 6 [style=solid penwidth=2.0]
	6 [label="¬F" fontsize=20]
}
\caption{Negative instances of $y_1$.}
\label{fig:nnf1}   
\end{subfigure}%
\begin{subfigure}{0.48\textwidth}
\centering
\digraph[scale=0.25]{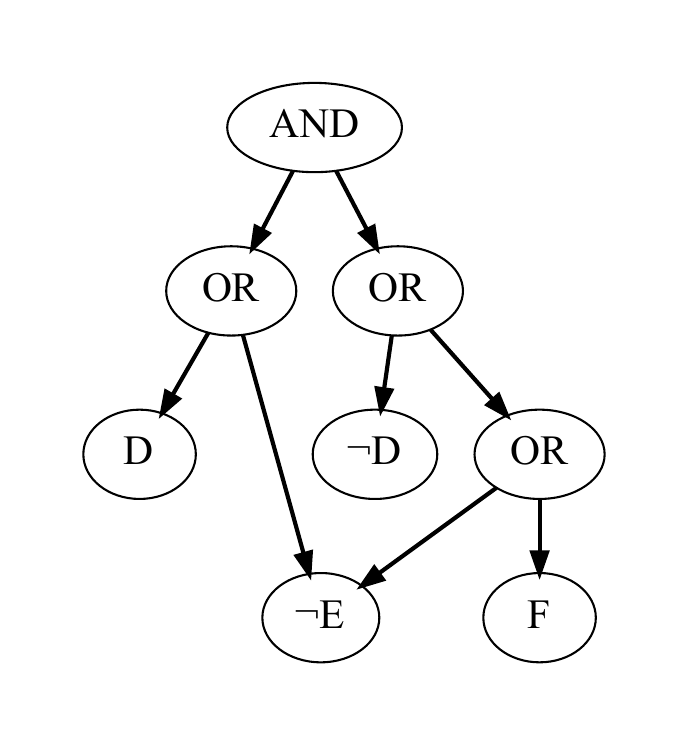}{
	12 [label=AND fontsize=20]
	12 -> 8 [style=solid penwidth=2.0]
	8 [label=OR fontsize=20]
	8 -> 2 [style=solid penwidth=2.0]
	2 [label=D fontsize=20]
	8 -> 4 [style=solid penwidth=2.0]
	4 [label="¬E" fontsize=20]
	12 -> 11 [style=solid penwidth=2.0]
	11 [label=OR fontsize=20]
	11 -> 3 [style=solid penwidth=2.0]
	3 [label="¬D" fontsize=20]
	11 -> 10 [style=solid penwidth=2.0]
	10 [label=OR fontsize=20]
	10 -> 4 [style=solid penwidth=2.0]
	10 -> 7 [style=solid penwidth=2.0]
	7 [label="F" fontsize=20]
}
\caption{Class formula of $y_1$.}
\label{fig:nnf2}   
\end{subfigure}%
\caption{(a) An AND-decomposable NNF for the negative instances of $y_1$, and (b) its negation, which is an OR-decomposable NNF of the class formula for $y_1$.}
\label{fig:classFormulaNNF}
\end{figure}

\subsection{Prior Work: Block-Order Based Compilation}

An algorithm for compiling general \ac{BNC}s into class formulas was proposed in~\cite{shih_compiling_2019}, which is the state of the art before our reported work.
The core concept underlying the algorithm is that of ``equivalent sub-classifiers.''
For a classifier $F(\X)$ and an instance $\x=(\uu,\vv)$, by fixing a part of the instance ($\uu$), we get a sub-classifier $F_\uu(\V)$ over a smaller set of features. 
Caching (and re-using already compiled circuits) depends on testing equivalence of some sub-classifiers $F_\uu$ and $F_{\uu'}$.
The algorithm explores the features $\X$ in ``block ordering'' $(\X_1, ..., \X_B)$.
Block ordering means that for each $k < B$, there is a variable $H^k \not \in \X$, that ``splits'' the features $\X$ into $(\X_1 \cup ... \cup \X_k)$ and $(\X_{k+1} \cup ... \cup \X_B)$.
A variable $H$ splits features $\X$ into $(\U, \V)$ iff $H$ d-separates features $\V$ from $\U \cup \{Y\}$.
The algorithm compiles sub-classifiers $F_\U(\V)$ for $\U = \X_1 \cup ... \cup \X_k$ and $\V = \X_{k+1} \cup ... \cup \X_B$.
For the network in \cref{fig:examplebn}, the only valid block ordering is: $\X_1 = \{E, F\}$, $\X_2=\{D\}$, with $H^1 = B$.
The compiled class formulas in~\cite{shih_compiling_2019} are \acfp{OBDD}.
\acp{OBDD} correspond to AND-decomposable NNFs when adjusting for notation, and can be efficiently converted to OR-decomposable NNFs of equal size~\cite{jair/DarwicheM21,darwiche_computation_2022}.
\cref{fig:exampleODDnnf} shows a compiled OBDD and an equivalent OR-decomposable NNF.

\begin{figure}
\centering
\begin{subfigure}{0.48\textwidth}
\centering
\digraph[scale=0.2]{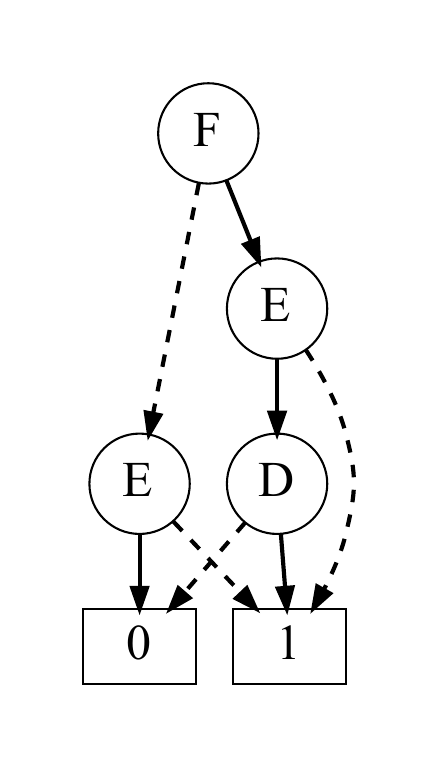}{
	0 [label=F shape=circle fontsize=24]
	0 -> 1 [style=dashed penwidth=2.0]
	1 [label=E shape=circle fontsize=24]
	1 -> S1 [style=dashed penwidth=2.0]
	S1 [label=1 shape=box fontsize=24]
	1 -> S0 [style=solid penwidth=2.0]
	S0 [label=0 shape=box fontsize=24]
	0 -> 6 [style=solid penwidth=2.0]
	6 [label=E shape=circle fontsize=24]
	6 -> S1 [style=dashed penwidth=2.0]
	6 -> 7 [style=solid penwidth=2.0]
	7 [label=D shape=circle fontsize=24]
	7 -> S0 [style=dashed penwidth=2.0]
	7 -> S1 [style=solid penwidth=2.0]}
\caption{Compiled OBDD.}
\end{subfigure}%
\begin{subfigure}{0.48\textwidth}
\centering
\digraph[scale=0.21]{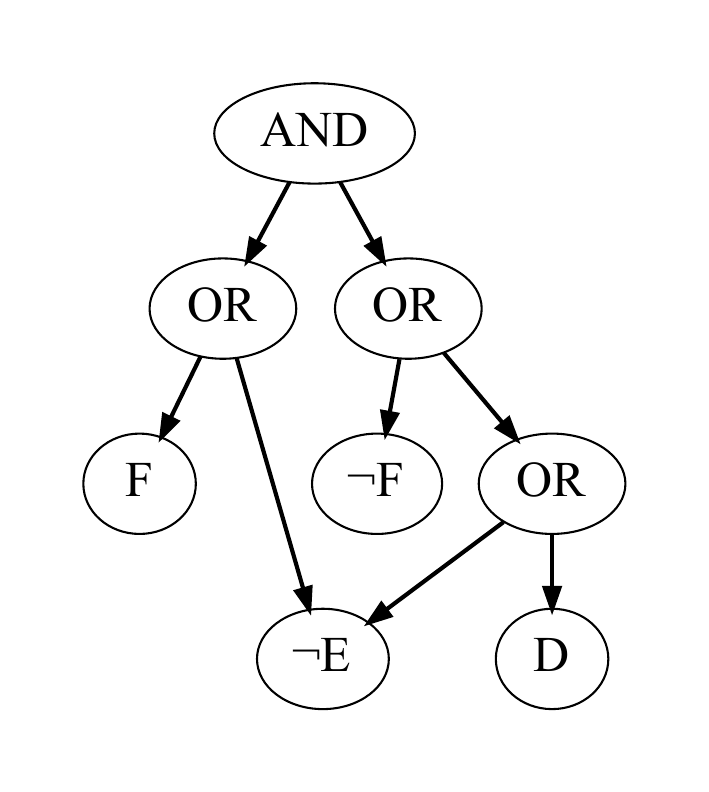}{
	12 [label=AND fontsize=24]
	12 -> 8 [style=solid penwidth=2.0]
	8 [label=OR fontsize=24]
	8 -> 2 [style=solid penwidth=2.0]
	2 [label=F fontsize=24]
	8 -> 4 [style=solid penwidth=2.0]
	4 [label="¬E" fontsize=24]
	12 -> 11 [style=solid penwidth=2.0]
	11 [label=OR fontsize=24]
	11 -> 3 [style=solid penwidth=2.0]
	3 [label="¬F" fontsize=24]
	11 -> 10 [style=solid penwidth=2.0]
	10 [label=OR fontsize=24]
	10 -> 4 [style=solid penwidth=2.0]
	10 -> 7 [style=solid penwidth=2.0]
	7 [label="D" fontsize=24]
}
\caption{Class formula of $y_1$.}
\end{subfigure}%
\caption{(a) An OBDD compiled by~\cite{shih_compiling_2019}. Solid (or dashed) arrows denote feature set to true (or false). (b) An equivalent OR-decomposable NNF.}
\label{fig:exampleODDnnf}
\end{figure}

\section{Feature Tree}
\label{sec:ftree}

To construct an OR-decomposable \ac{NNF} for the formula of class $y_i$ (\cref{fig:nnf2}), we will first construct an AND-decomposable \ac{NNF} (\cref{fig:nnf1}) representing all the negative instances of $y_i$, and then negate the \ac{NNF}.
Instead of a linear block ordering of features, we will arrange features using a feature tree (f-tree), which is a sub-tree embedded in a jointree. 
For some edge $e$ in the f-tree with separator $\Sep$, we will introduce the notion of feature-splitting edge, which is an edge $e$ that partitions features $\X$ into $(\U, \V)$ where $\U \cup \{Y\}$ are d-separated from $\V$ by $\Sep$.
We will then show the key condition that enables the algorithm to sometimes classify a partial feature instantiation $\uu$ of $\U$ without considering $\V$ at all. 
As the f-tree is traversed during the compilation, features are instantiated to form a partial instantiation $\uu$, and the algorithm uses this key condition to check if $\uu$ can be classified. 
If so, for this $\uu$, it does not enumerate over the remaining features $\V$, and thus avoids redundant computation.
Otherwise, it continues the traversal to build an NNF over $\V$, such that the conjunction of $\uu$ and the built NNF represents the negative instances for the target class $y_i$. 

We will first introduce the f-tree in this section, and then describe the compilation algorithm in the next section.

\begin{definition}[Feature tree (f-tree)]
    For a \acl{BNC} with network $\mathcal{N}$, feature variables $\X$, and target variable $Y$, 
    let $\mathcal{T}^J$ be a jointree for $\mathcal{N}$.
    A \textbf{feature tree} $\mathcal{T}$ for the \ac{BNC} is a connected sub-tree of $\mathcal{T}^J$, such that each variable in $\X \cup \{Y\}$ must appear in some cluster in $\mathcal{T}$.
\end{definition}

\cref{fig:ftree} shows a jointree and an embedded f-tree for the network shown in \cref{fig:examplebn}.
Note that an f-tree does not need to include every jointree node that mentions $\X \cup \{Y\}$.
Since the cluster of Node 6 includes target $A$, this f-tree does not need to include Node 5.
For conciseness of our discussion and without loss of generality, we will assume that, features only appear in leaf nodes of the f-tree, and the f-tree is binary.\footnote{Leaf variables (features) can be moved from some internal node $n$ to a newly created leaf node in a jointree; the jointree width is not increased because the newly created node has a cluster no greater than what node $n$ had.
While preserving width, a jointree can also be made binary, i.e., no node has more than three neighbors~\cite{darwiche_modeling_2009}. A connected sub-tree of a binary jointree is also binary.}

\begin{figure}
\centering
\digraph[scale=0.25]{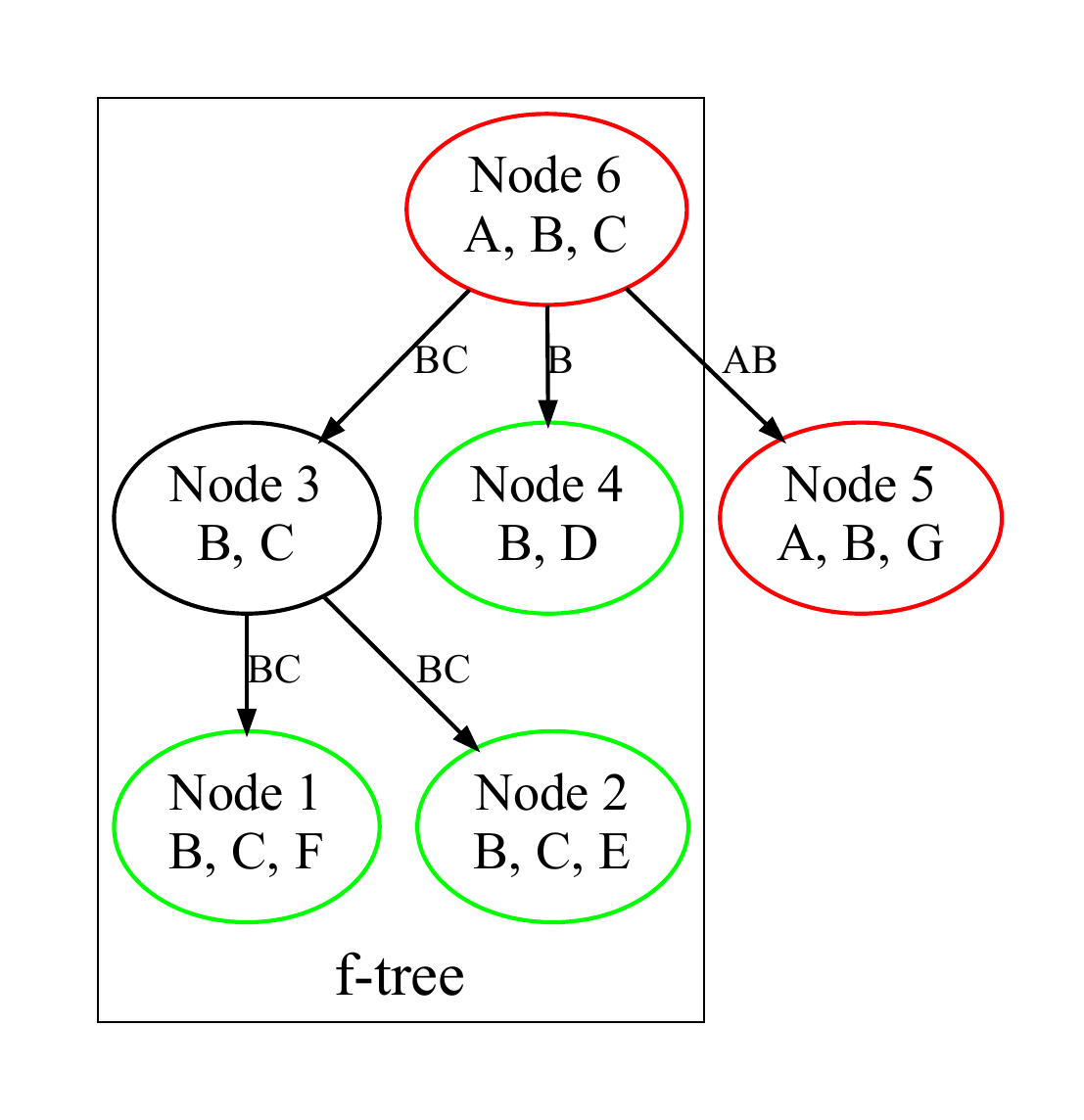}{
    fontsize=30
	ABG [label="Node 5
A, B, G" color=red fontsize=26 penwidth=2]
    subgraph cluster_0 {

        label="f-tree"
        labelloc=b
    ABC [label="Node 6
A, B, C" color=red fontsize=26 penwidth=2]
    BCF [label="Node 1
B, C, F" color=green fontsize=26 penwidth=2]
	BC [label="Node 3
B, C" fontsize=18 fontsize=26 penwidth=2]
    BCE [label="Node 2
B, C, E" color=green fontsize=26 penwidth=2]
	BD [label="Node 4
B, D" color=green fontsize=26 penwidth=2]
}
	ABC -> BD [label=B color=Black fontsize=20 penwidth=2]
	ABC -> BC [label="BC" color=Black fontsize=20 penwidth=2]
	ABC -> ABG [label="AB" color=Black fontsize=20 penwidth=2]
	BC -> BCF [label="BC" color=Black fontsize=20 penwidth=2]
	BC -> BCE [label="BC" color=Black fontsize=20 penwidth=2]
}
\caption{A jointree and an f-tree (the sub-tree in the box) for the network in \cref{fig:examplebn}. 
The nodes are labeled by their clusters and the edges are labeled by their separators. 
The nodes that include the target variable A are in red, and the nodes that include the features are in green.}
\label{fig:ftree}
\end{figure}

\subsection{Splitting Features}

Since an f-tree is a sub-tree of a jointree, it follows that if some variable appears in two clusters, the variable must appear in every cluster and every separator on the path connecting the two clusters.
In other words, if a variable does not appear in the separator of an edge, the variable must only appear on one side of the edge.
Also, recall that an edge's separator d-separates the variables that only appear on one side from the variables that only appear on the other side.

\begin{definition}[Feature-Splitting Edge]\label{def:feaSplitE}
    For an f-tree $\mathcal{T}$ of a \ac{BNC} $(\mathcal{N}, \X, Y)$, let $e$ be an edge in $\mathcal{T}$, and $\Sep$ be the separator for $e$.
    Edge $e$ is a feature-splitting edge iff $\Sep$ does not include any variable in $\{Y\} \cup \X$.
    Let $\U$ be the features that appear on the same side of the edge as $Y$, and let $\V = \X \setminus \U$ be the features on the other side.
    We will then say that edge $e$ splits features $\X$ into partition $(\U, \V)$.
\end{definition}

A feature-splitting edge $e$ has the key property that, its separator $\Sep$ d-separates $\{Y\} \cup \U$ from $\V$. 
Let $\sep$ denote an instantiation of $\Sep$.
The joint distribution $P(Y, \X)$ can be re-written as:

\begin{equation}\label{eqn:factorize}
    P(Y, \X) = \sum_{\sep} P(Y, \U, \sep) P(\V|\sep)
\end{equation}

With the f-tree in \cref{fig:ftree}, the distribution $P(A, D, E, F)$ can be factorized, e.g., by considering the edge $e$ between node 3 and 6. 
Let $\Sep = \{B, C\}$, $\U = \{D\}$, and $\V = \{E, F\}$, we have $P(A, D, E, F) = \sum_{bc} P(A, D, bc) \cdot P(E, F | bc)$, where $bc$ denotes an instantiation of $\{B, C\}$.

\subsection{Classifying Partial Feature Instantiation}

With features $\X$ partitioned into $(\U, \V)$, an instantiation $\uu$ of $\U$ is a partial feature instantiation.
We will call $\uu$ a \textit{positive partial instantiation} for $y_i$ iff $\x = (\uu, \vv)$ is classified as $y_i$ for all possible instantiation $\vv$ of $\V$.
Conversely, the partial instantiation $\uu$ is negative for $y_i$ iff $(\uu, \vv)$ is not classified as $y_i$ for all possible $\vv$.

Recall that we want to compile an AND-decomposable NNF for the negative instances for a target class $y_i$.
Instance $\x$ is negative iff some class $y_j \neq y_i$ is more likely than $y_i$, i.e., $P(y_i, \x) - P(y_j, \x) \le 0$.
For edge $e$ with separator $\Sep$ that splits the instance $\x$ into $(\uu, \vv)$, we can re-write the above condition using the factorization shown in \cref{eqn:factorize}:

\begin{equation}
\begin{gathered}\label{eqn:criteria}
P(y_i, \x) - P(y_j, \x)=
    \sum_{\sep} \Delta_{\sep}^{ij}(\uu) \cdot P(\vv | {\sep}) \le 0, \\
    \text{where $\Delta_{\sep}^{ij}(\uu) = P(y_i, \uu, {\sep}) - P(y_j, \uu, {\sep})$}
\end{gathered}
\end{equation}

Let $\dom(\V)$ denote the instantiations ${\bf v}$ of variables ${\V}$ (i.e., state space), and $|\dom(\cdot)|$ denote the size of that domain.
$\Delta_\Sep^{ij}(\uu)$ can be viewed as an array of size $|\dom(\Sep)|$.
If every element $\Delta_{\bf s}^{ij}(\uu)$ of $\Delta_\Sep^{ij}(\uu)$ is positive (or non-positive), we will say that the array $\Delta_\Sep^{ij}(\uu)$ is positive (or non-positive). We now have the following immediate result.

\begin{theorem}\label{thm:uClassify}
    Let $\mathcal{T}$ be an f-tree of \ac{BNC} $(\mathcal{N}, \X, Y)$, $e$ be an edge in $\mathcal{T}$ that splits features $\X$ into partition $(\U,\V)$.
    For a partial feature instantiation $\uu$, if $\Delta_\Sep^{ij}(\uu)$ is non-positive for some $j \neq i$, $\uu$ is negative for class $y_i$.
    Conversely, if $\Delta_\Sep^{ij}(\uu)$ is positive for every $j \neq i$, $\uu$ is positive for class $y_i$.
\end{theorem}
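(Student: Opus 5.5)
The argument rests on the factorization in \cref{eqn:criteria}. Because $e$ is a feature-splitting edge, its separator $\Sep$ d-separates $\{Y\}\cup\U$ from $\V$, so \cref{eqn:factorize} holds. For every completion $\vv$ of $\uu$ and every pair $i \neq j$ we therefore have
\[
P(y_i,\uu,\vv)-P(y_j,\uu,\vv)=\sum_{\sep}\Delta^{ij}_{\sep}(\uu)\,P(\vv\mid\sep).
\]
The weights $P(\vv\mid\sep)$ are non-negative, and the array $\Delta^{ij}_\Sep(\uu)$ does not depend on $\vv$. Both parts of the theorem then follow from the sign of a non-negative combination of fixed coefficients.

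\emph{Negative direction.} Suppose $\Delta^{ij}_\Sep(\uu)$ is non-positive for some $j\neq i$. Take any $\vv$. Each term $\Delta^{ij}_{\sep}(\uu)P(\vv\mid\sep)$ is a product of a non-positive and a non-negative number, so it is non-positive, and hence so is the sum. This gives $P(y_i,\x)\le P(y_j,\x)$ for $\x=(\uu,\vv)$. Dividing by $P(\x)$ when it is positive, $p(y_i\mid\x)\le p(y_j\mid\x)$. Following the convention already adopted in the paper (``instance $\x$ is negative iff some class $y_j\neq y_i$ is more likely than $y_i$, i.e., $P(y_i,\x)-P(y_j,\x)\le 0$''), $y_i$ is then not the strict arg-max, so $\x$ is not classified as $y_i$. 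Since $\vv$ was arbitrary, $\uu$ is negative for $y_i$.

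\emph{Positive direction.} Suppose $\Delta^{ij}_\Sep(\uu)$ is strictly positive for every $j\neq i$. For a fixed $\vv$, the sum $\sum_\sep \Delta^{ij}_\sep(\uu)P(\vv\mid\sep)$ is non-negative. It is strictly positive provided some $\sep$ has $P(\vv\mid\sep)>0$. This is the one subtle point: if $P(\vv\mid\sep)=0$ for all $\sep$, then $P(\x)=0$ and the classification of $\x$ is degenerate. I would handle this by assuming, as is standard, that every feature instantiation has positive probability, or equivalently that positive CPTs are used. Given that $\sum_\sep P(\vv\mid\sep)P(\sep)=P(\vv)>0$, some $\sep$ has $P(\vv\mid\sep)>0$, so the difference is strictly positive. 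Then $P(y_i,\x)>P(y_j,\x)$ for every $j\neq i$, so $i$ is the unique arg-max and $\x$ is classified as $y_i$. Since $\vv$ was arbitrary, $\uu$ is positive for $y_i$.

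I expect this zero-probability or tie-handling issue to be the only real obstacle. The rest is a direct sign argument on \cref{eqn:criteria}, which is why the statement is called immediate.
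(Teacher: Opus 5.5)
Your proof is correct and is the argument the paper has in mind when it calls the result immediate. Because $\Delta^{ij}_\Sep(\uu)$ does not depend on $\vv$ and the weights $P(\vv\mid\sep)$ in \cref{eqn:criteria} are non-negative, the sign of $P(y_i,\x)-P(y_j,\x)$ is controlled uniformly over all completions $\vv$. Your extra care in the positive direction, which requires $P(\vv)>0$ (e.g.\ via positive CPTs) to get a strict inequality, is a genuine refinement that the paper leaves implicit: without it, a zero-probability completion would be counted as negative under the paper's tie convention.
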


In our example f-tree \cref{fig:ftree}, suppose $\uu = d_0$, then we have $\Delta_\Sep^{ij}(\uu) = P(y_i, d_0, B, C) - P(y_j, d_0, B, C)$.
Crucially, $\Delta_\Sep^{ij}$ is a factor only over the separator $\{B, C\}$ and does not depend on the values of features $\V = \{E, F\}$.

We now have the key component (\cref{alg:decideNeg}) that will be used by our compilation algorithm.
Whenever the partial instantiation $\uu$ is updated, we compute a new factor $M= P(Y, \uu, \Sep)$, and call \Call{decide-negative}{$M, \Sep, y_i$} to check whether $\uu$ can be decided to be either positive or negative for class $y_i$.
\Cref{line:feaSplitE} checks whether the edge with separator $\Sep$ is a feature-splitting edge by \cref{def:feaSplitE};
if so, \cref{thm:uClassify} is directly applied.
Note that when $\Delta_\Sep^{ij}$ is computed in \cref{line:Deltaij}, factor $M$ is treated as a matrix of shape $|\dom (Y)| \times |\dom(\Sep)|$, and $M[y_i]$ denotes an array of size $|\dom(\Sep)|$ that represents $P(y_i, \uu, \Sep)$.

\begin{algorithm}
\caption{\Call{decide-negative}{$M, \Sep, y_i$}}
\label{alg:decideNeg}
\begin{algorithmic}[1]

\Require{$M=P(Y, \uu, \Sep)$}
\Ensure{If partial instantiation $\uu$ cannot be decided, return \nil; otherwise, return whether $\uu$ is \textit{negative} for $y_i$.}
\If{$\Sep \cap (\X \cup \{Y\})$ is $\emptyset$} \label{line:feaSplitE}
    \State $b \gets \true$
    \ForAll{$y_j \in Y$, ($y_j \neq y_i$)}
        \State $\Delta^{ij}_\Sep \gets M[y_i] - M[y_j]$ \label{line:Deltaij}
        \If{$\Delta^{ij}_\Sep$ is non-positive} \Return{$\true$} \label{line:uNeg} \EndIf
        \State $b \gets b \mbox{ and } (\Delta^{ij}_\Sep$ is positive$)$
    \EndFor
    \If{$b$} \Return{\false} \label{line:uPos} \EndIf 
\EndIf
\State \Return \nil
\end{algorithmic}
\end{algorithm}

\section{Compiling Class Formulas using f-Trees}
\label{sec:algMain}

We will begin our discussion of the compilation algorithm with some notation.
For a (binary) f-tree $\mathcal{T}$, a node that contains the target $Y$ in its cluster is selected as the root node.
For some node $q$ in $\mathcal{T}$, the neighbor closer to the root is the parent node $p$, and the neighbors further away from the root are the children nodes $(l, r)$.\footnote{We will show later that having only one child node can be treated as a trivial special case.}
Let $\V$ denote the features that appear in the sub-tree rooted by $q$, and $\U = \X \setminus \V$ be the rest of the features. 
Let $\V^l, \V^r$ denote the features that appear in the $l$ and $r$ sub-trees respectively. 
Since features only appear in leaf nodes of an f-tree, $\V^l \cap \V^R = \emptyset$.

\begin{algorithm}

\caption{\Call{compile-negative}{$q, M$}\\
$y_i$: the target class, 
$\phi_n$: marginal over the cluster of node $n$
}
\label{alg:compile_classifier}

\begin{algorithmic}[1]

\State $\Sep_{pq} \gets$ separator between $q$ and its parent $p$
\State $d \gets \Call{decide-negative}{M, \Sep_{pq}, y_i}$ \label{line:callDecide}
\If {$d$ is not \nil} \Return{$d$} \EndIf

\State $\Gamma \gets \emptyset$
\If{$q$ is a leaf node}
    \ForAll{ $(\vv, \pr) \in \Call{get-instances}{q}$}:
        \State $\psi \gets \Call{project}{M  \cdot \pr,Y}$\Comment{$\psi = P(Y, \x)$}
        \If{$\arg\max \psi$ is not $y_i$}
        add $\vv$ to $\Gamma$ \label{line:leafArgmax}
        \EndIf
\EndFor

\Else
    \State $\phi_q' \gets M \cdot \phi_q$ \label{line:compileMult}
    \State $\Sep_{qr} \gets $ separator between $q$ and child $r$
    \State $\phi_{rq} \gets \Call{project}{\phi_{r}, \Sep_{qr}}$
    \ForAll{ $(\vv^l, \pr^l) \in \Call{get-instances}{l}$} \label{line:enumL}
        \State $M_{qr} \gets \Call{project}{\phi_q' \cdot \pr^l, \Sep_{qr} \cup \{Y\}} / \phi_{rq}$ \label{line:newMsg}
        \State $\alpha \gets \Call{compile-negative}{r, M_{qr}}$ \label{line:compileRecur}
        \State add $(\vv^l \land \alpha)$ to $\Gamma$
    \EndFor
\EndIf
\If {$\Gamma$ is $\emptyset$} \Return{\false} \EndIf
\State \Return $\Call{disjoin}{\Gamma}$
\end{algorithmic}
\end{algorithm}

The main compilation algorithm (\cref{alg:compile_classifier}) is a recursive procedure \Call{compile-negative}{$q, M$} on f-tree node $q$ by its parent $p$.
The argument $M$ is a factor $P(Y, \uu, S_{pq})$, where $\uu$ is a distinct instantiation of features $\U$, and $S_{pq}$ is the separator of the edge $(p,q)$.
On \cref{line:callDecide}, the algorithm calls \Call{decide-negative}{$\cdot$} to check whether $\uu$ can be classified using \cref{thm:uClassify}; if so, it returns immediately.

In the base case that $q$ is a leaf node, for each $\vv$ of $\V$, we can compute $P(Y, \uu, \vv)$, and if $\x = (\uu, \vv)$ is negative for the target class $y_i$, $\vv$ is added to the set $\Gamma$.
If $q$ is an internal node, the algorithm enumerates over the instantiations of features $\V^l$ from child $l$.
For each instantiation $\vv^l$, the algorithm computes a factor $M_{qr} = P(Y, \uu, \vv^l, \Sep_{qr})$ (\cref{line:newMsg}), and makes a recursive call on the child $r$ (\cref{line:compileRecur}).
The return value $\alpha$ is an AND-decomposable NNF over $\V^r$, such that for each $\vv^r$ that is a model of $\alpha$, $(\uu, \vv^l, \vv^r)$ is a negative instance for $y_i$.
Since $\V^l \cap \V^r = \emptyset$, $\vv^l \land \alpha$ is also an AND-decomposable NNF, and is added to the set $\Gamma$.
Finally, the algorithm returns a disjunction over $\Gamma$, which is still AND-decomposable.

The helper function \Call{get-instances}{$q$} enumerates over instantiation $\vv$ of $\V$ (the features in the $q$ sub-tree), and returns all possible $(\vv, P(\vv|\Sep_{pq}))$, where $\Sep_{pq}$ is the separator between node $q$ and its parent.
Since this function does not depend on $\uu$, the results are cached and reused.
When node $q$ only has one child $r$ and $l = \nil$, \Call{get-instances}{$l$} (\cref{line:enumL}) returns just a single tuple with $\vv^l=\top$ and $p^l = 1$.
The \Call{project}{$\cdot$} function is the standard factor projection. 
This discussion leads to the following result.

\begin{theorem}
    \cref{alg:compile_classifier} returns an AND-decomposable \ac{NNF} representing the negative instances for class $y_i$.
\end{theorem}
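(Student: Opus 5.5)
We prove the statement by structural induction on the f-tree, bottom-up from the leaves. The top-level call is on the root, with $M$ the factor $P(Y)$ (equivalently, $\uu$ empty). The inductive invariant for a call \Call{compile-negative}{$q, M$} is as follows. Let $M = P(Y, \uu, \Sep_{pq})$ for some instantiation $\uu$ of $\U = \X\setminus\V$. Then the returned NNF $\alpha$ mentions only variables in $\V$, is AND-decomposable, and has as models exactly those $\vv$ for which $(\uu,\vv)$ is a negative instance of $y_i$. Applying the invariant at the root, where $\V=\X$, gives the theorem.

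\textbf{Early return.} If \Call{decide-negative}{} returns a value, then $\Sep_{pq}$ contains no variable of $\X\cup\{Y\}$. Since features sit only in leaves, the edge $(p,q)$ is feature-splitting and splits $\X$ into $(\U,\V)$. This uses the fact that $Y$ lies on the root side, because the root cluster contains $Y$. By \cref{thm:uClassify}:
\begin{itemize}
\item if $\Delta^{ij}_\Sep(\uu)$ is non-positive for some $j\neq i$, every $\vv$ gives a negative instance, so returning \true{} is correct;
\item if $\Delta^{ij}_\Sep(\uu)$ is positive for all $j\neq i$, no $\vv$ gives a negative instance, so returning \false{} is correct.
\end{itemize}
Both constants are trivially AND-decomposable.

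\textbf{Leaf case.} Here we need $M\cdot P(\vv\mid\Sep_{pq})$ projected onto $Y$ to equal $P(Y,\uu,\vv)$. This follows from \cref{eqn:factorize}, applied to the split at $\Sep_{pq}$ via the d-separation property of jointree separators. The same identity holds even when the edge is not feature-splitting: for every edge, the separator d-separates the variables appearing only on one side from those appearing only on the other. The $\arg\max$ test then admits exactly the negative $\vv$. The disjunction of the admitted terms $\vv$ is AND-decomposable, because each term is a conjunction of literals over distinct variables.

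\textbf{Internal case.} This is the main obstacle. We must show that \cref{line:newMsg} produces exactly $M_{qr}=P(Y,\uu,\vv^l,\Sep_{qr})$, so that the induction hypothesis applies to child $r$. The computation multiplies the incoming message $M$ by the marginal $\phi_q$ and by $P(\vv^l\mid\Sep_{ql})$, projects, and divides by $\phi_{rq}=P(\Sep_{qr})$. I would argue via the standard jointree identity that the joint over the variables in the subtree rooted at $q$, together with $\uu$, factorizes as
\[
\frac{P(\Y,\uu,\Sep_{pq})\,\phi_q\,P(\vv^l\mid\Sep_{ql})\,P(\vv^r\mid\Sep_{qr})}{P(\Sep_{pq})}.
\]
The point to verify is that the division by $P(\Sep_{pq})$ is absorbed correctly, and I would pin down the convention for $\phi_q$ (raw marginal or normalized) so that the factors match. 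I would first check this on the chain $p-q-r$ using conditional independence given the separators. Then I would use the running-intersection property, which guarantees that $\Sep_{pq},\Sep_{ql},\Sep_{qr}\subseteq$ cluster$(q)$.

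Once $M_{qr}$ is justified, the induction hypothesis gives that $\alpha$ mentions only $\V^r$ and captures exactly the negative completions of $(\uu,\vv^l)$. Since $\V^l\cap\V^r=\emptyset$, each conjunction $\vv^l\wedge\alpha$ is AND-decomposable. The disjunction over all $\vv^l$ then has as models exactly the negative $\vv=(\vv^l,\vv^r)$, because the instantiations $\vv^l$ partition $\dom(\V)$. Disjunctions impose no decomposability constraint, so the result is AND-decomposable. If $\Gamma$ is empty, returning \false{} is correct.

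The single-child case follows by taking $\vv^l=\top$ with probability $1$.
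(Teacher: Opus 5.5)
Your induction is the argument the paper gives in the discussion before the theorem: early exit via \cref{thm:uClassify}, enumeration at leaves, and $\V^l\cap\V^r=\emptyset$ for AND-decomposability. Your invariant is more careful than the paper's: you track the scope of $\alpha$, state the models as ``exactly'', and note that the $\vv^l$ partition $\dom(\V)$. The paper simply asserts that \cref{line:newMsg} produces $M_{qr}=P(Y,\uu,\vv^l,\Sep_{qr})$. You rightly single this out as the crux but leave it as ``the point to verify'', and it cannot be verified as stated, so this is a genuine gap. Under your invariant $M=P(Y,\uu,\Sep_{pq})$ with $\phi_q=P(C_q)$, the separator property gives $M\cdot\phi_q=P(\Sep_{pq})\,P(Y,\uu,C_q)$. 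The projection on \cref{line:newMsg} therefore carries $P(\Sep_{pq})$ inside the sum over $C_q\setminus(\Sep_{qr}\cup\{Y\})$. Dividing by $\phi_{rq}=P(\Sep_{qr})$ does not cancel it in general; it does only when $\Sep_{pq}=\Sep_{qr}$. Your own displayed identity shows the normalizer must be $P(\Sep_{pq})$, not $\phi_{rq}$. The induction does not close until you fix one normalization convention and recheck all three cases against it.

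The invariant \cref{line:newMsg} actually maintains is the normalized (Hugin-style) one, $M=P(Y,\uu\mid\Sep_{pq})$. Then $M\cdot\phi_q=P(Y,\uu,C_q)$. Multiplying by $\pr^l=P(\vv^l\mid\Sep_{ql})$ gives $P(Y,\uu,C_q,\vv^l)$ by the separator property of edge $(q,l)$. Projecting and dividing by $P(\Sep_{qr})$ then gives $P(Y,\uu,\vv^l\mid\Sep_{qr})$. The early exit survives this convention, because each entry $\Delta^{ij}_{\sep}$ is only rescaled by $1/P(\sep)>0$, and \cref{thm:uClassify} uses nothing but signs. The leaf case does not survive it. There you must compute $\sum_{\sep}M(Y,\sep)\,P(\sep)\,P(\vv\mid\sep)$, for instance from $M\cdot\phi_q$ restricted to $\vv$. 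The quantity $\sum_{\sep}M(Y,\sep)\,P(\vv\mid\sep)$ is not proportional to $P(Y,\uu,\vv)$ and can have the wrong $\arg\max$. Alternatively, keep your joint invariant and set $\phi_q'=M\cdot\phi_q/P(\Sep_{pq})$, with no division by $\phi_{rq}$.

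Separately, the root needs explicit treatment. It has no edge $(p,q)$, so your early-exit argument says nothing there. With an empty separator, the test on \cref{line:feaSplitE} of \cref{alg:decideNeg} passes trivially. \Call{decide-negative}{$\cdot$} would then return a constant determined by $M$ alone, e.g.\ by $P(y_i)-P(y_j)$ for your choice $M=P(Y)$. Give the root a separator containing $Y$: take $M=P(Y)$ in the joint convention or $M\equiv 1$ in the normalized one. This forces \nil{} and yields $\phi_q'=\phi_{\mathrm{root}}$.
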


\subsection{Complexity Analysis}

We now consider the time complexity of the algorithm.
Let $n$ be the number of variables in the network, and $\omega$ be the width of the jointree.
Let $C_q$ be the cluster of f-tree node $q$, and \ftw{$q$} be the width of the sub-tree (rooted at $q$) of the f-tree.
Features inside and outside of the sub-tree are denoted by $\V_q$ and $\U_q$ respectively.
Let $R$ be the nodes on the rightmost path of an f-tree $\mathcal{T}$, i.e., every node is the right (or only) child of its parent, and let $L$ denote the left child of the nodes in $R$.
The width of the compilation algorithm is $\omega^\mathcal{T} = \max(\omega, \omega^R , \omega^L)$, where 
$\omega^R = \max_{q \in R} |\U_q| + |C_q|$, and
$\omega^L = \max_{q \in L} |\V_q| + \ftw{q}.$

\begin{theorem}
   The time complexity of \cref{alg:compile_classifier} is $\mathcal{O}(n \exp \omega^\mathcal{T}),$ 
   where $n$ is the number of variables in the network, and $\omega^\mathcal{T}$ is the width of the compilation algorithm.
\end{theorem}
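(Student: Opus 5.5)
The plan is to bound the total work by charging it to f-tree nodes. For each node $q$ I would bound (a) how many times \Call{compile-negative}{$q,\cdot$} is invoked, and (b) the local cost of one invocation excluding its recursive calls. The product of these two quantities is then bounded by $\exp \omega^\mathcal{T}$ up to constants absorbed in the exponential. Summing over the $\mathcal{O}(n)$ nodes of the jointree, and therefore of the f-tree, gives $\mathcal{O}(n \exp \omega^\mathcal{T})$.

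I would also isolate the preprocessing costs. The cluster marginals $\phi_n$ are computed once by the jointree algorithm in $\mathcal{O}(n \exp \omega)$. The call \Call{get-instances}{$q$} is cached. For a node $q$ in $L$ it enumerates $\dom(\V_q)$ together with $P(\vv|\Sep_{pq})$. I would compute this bottom-up in the subtree of $q$, multiplying and projecting factors over clusters of that subtree while the features already enumerated are kept instantiated. This costs $\exp(|\V_q| + \ftw{q})$ per node of the subtree, which matches the definition of $\omega^L$.

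Next, the call structure. Recursive calls are made only on the child $r$ (\cref{line:compileRecur}), so \Call{compile-negative}{} is invoked only on nodes of the rightmost path $R$, beginning with the root. Left children are touched only through \Call{get-instances}{}. For $q\in R$, each invocation corresponds to a distinct partial instantiation $\uu$ of $\U_q$, since each call arises from extending the parent's $\uu$ by an instantiation $\vv^l$ of the parent's left subtree. Hence $q$ is invoked at most $|\dom(\U_q)| = \mathcal{O}(\exp|\U_q|)$ times, because domain sizes are bounded constants.

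The per-invocation cost at an internal $q\in R$ is as follows. The call to \Call{decide-negative}{} costs $\mathcal{O}(|\dom(Y)||\dom(\Sep_{pq})|) \le \exp|C_q|$. The product $M\cdot\phi_q$ is over $C_q$ and costs $\exp|C_q|$. The loop over $\vv^l$ performs one projection over $C_q$ per instantiation $\vv^l$. The total charge to $q$ is therefore $\exp(|\U_q| + |C_q|)$ times the number of instantiations $\vv^l$. Those instantiations are exactly the $\uu$ values of the child $r$, because $\U_r = \U_q \cup \V^l$. So I would charge the loop body to the child's invocations, comparing $|\U_r| + |C_q|$ with $|\U_r| + |C_r|$ and using that $\Sep_{qr} \subseteq C_r$.

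A leaf $q\in R$ enumerates $\dom(\V_q)$ per call, with $\V_q\subseteq C_q$ by the leaf assumption, so its cost is $\exp(|\U_q| + |C_q|)$. Together these give a per-node cost of $\exp\max(\omega^R,\omega^L,\omega)$.

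The main obstacle is the loop at internal nodes of $R$. Each projection of $\phi_q'\cdot \pr^l$ involves a factor $\pr^l$ over $\Sep_{ql}$ and a factor over $C_q$, and naively this costs $\exp|C_q|$ per $\vv^l$. It must be shown that this is dominated by either $\exp(|\U_r|+|C_r|)$ or $\exp(|\U_q|+|C_q|+|\V^l|)$. For the latter I would check that $|\V^l| + |C_q|$ stays within $\omega^L$ or $\omega^R$ up to the separator overlap. The first thing to try is to bound it by $|\U_r| + |C_q|$ and to argue $|C_q| \le |C_r| + \omega$-type slack using $\Sep_{qr}\subseteq C_r$. If that fails, I would fold the term into $\omega^L$ via $\ftw{l}\ge |\Sep_{ql}|$.
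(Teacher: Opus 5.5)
Your decomposition is the paper's own: cluster marginals in $\mathcal{O}(n\exp\omega)$; \textsc{compile-negative} invoked only on $R$, with $\mathcal{O}(\exp|\U_q|)$ calls at $q$; and cached \textsc{get-instances} on $L$ at cost $\mathcal{O}(\exp(|\V_q|+\ftw{q}))$. The gap is the step you flag as the main obstacle and then leave open. Neither of your patches closes it.

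At an internal $q\in R$ with children $l,r$, each of the $\exp|\U_q|$ calls executes \cref{line:newMsg} once per $\vv^l$, and each execution projects a factor over $C_q\cup\{Y\}$. So the node costs $\exp(|\U_q|+|\V^l|+|C_q|)=\exp(|\U_r|+|C_q|)$, and you cannot compare this with $|\U_r|+|C_r|$, because $C_q$ may be much larger than $C_r$.

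\begin{itemize}
\item Your first patch, $|C_q|\le|C_r|+\omega$, only gives $\exp(\omega^R+\omega)$. The exponents add, so this is not $\mathcal{O}(\exp\max(\omega,\omega^R,\omega^L))$.
\item Your second patch fails because $\omega^L$ only controls $|\V^l|+\ftw{l}\ge|\V^l|+|\Sep_{ql}|$. It says nothing about the extra summand $|\U_q|$.
\item Pre-projecting $\phi'_q$ onto $\{Y\}\cup\Sep_{ql}\cup\Sep_{qr}$ once per call does not help either. The loop becomes a matrix product costing $\exp(|\U_r|+|\Sep_{ql}\cup\Sep_{qr}|)$, which still carries $\Sep_{ql}$ on top of $\U_r$.
\end{itemize}

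No inequality will rescue this step with $\omega^R$ as defined. Build the following f-tree:
\begin{itemize}
\item the root's left subtree carries $k$ features at small width, so its right child $q$ has $|\U_q|=k$;
\item $|C_q|\approx|\Sep_{ql}|\approx k$;
\item $q$'s left subtree has $k$ features and width about $k$;
\item $r$ is a small leaf.
\end{itemize}
Then $\omega,\omega^R,\omega^L$ are all at most $2k+O(1)$. In the worst case the paper considers, where nothing is decided early, the loop at $q$ performs $\exp(2k)$ projections, each of size about $\exp k$. That is $\exp(3k)$ work, and the factor $n=O(k)$ cannot absorb it.

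The paper's proof does not resolve this either. It charges $\mathcal{O}(\exp|C_q|)$ per call and so silently drops the factor $|\dom(\V^l)|$. For your charging scheme to prove a correct statement, the width must include this term. For example, replace $\omega^R$ by $\max_{q\in R}(|\U_q|+|\V^{l_q}|+|C_q|)$, where $l_q$ is the left child of $q$ and $\V^{l_q}$ is taken as empty when $q$ has no left child. With that definition your per-node argument goes through essentially unchanged.
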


\begin{proof}

The algorithm needs to first compute the cluster marginals for the f-tree, which takes time $\mathcal{O}(n \exp{\omega})$.
For the main compilation process, all calls to \Call{compile-negative}{$q, \dots$} have $q \in R$; also, all calls to \Call{get-instances}{$q$} have $q \in L$.
In the worst case, the algorithm is never able to classify a partial feature instantiation, it needs to compute $p(Y, \x)$ for every possible $\x$ (in the base case of the recursion).\footnote{Each full instantiation $\x$ is examined at most once, so the algorithm is guaranteed to terminate.}
So, for each node $q \in R$, there are $\mathcal{O}(\exp|\U_q|)$ calls to \Call{compile-negative}{$\cdot$}; each call requires computing factor over cluster $C_q$ and takes time $\mathcal{O}({\exp |C_q|})$.
Thus, the overall time for \Call{compile-negative}{$\cdot$} is $\mathcal{O}(n \exp{\omega^R})$.
Finally, for each $q \in L$, the initial call to \Call{get-instances}{$q$} requires time $\mathcal{O}(\exp(|\V_q| + \ftw{q})$; the output is cached and reused in subsequent calls.
The total time for \Call{get-instances}{$\cdot$} is then $\mathcal{O}(n \exp{\omega^L})$.
In total, the algorithm takes time $\mathcal{O} (n (\exp{\omega} + \exp{\omega^R} + \exp{\omega^L}))$, which can be simplified to $\mathcal{O}(n \exp \omega^\mathcal{T})$.
\end{proof}

It is worth noting that the total runtime theoretically cannot be lower than $\mathcal{O}(n \exp{\omega})$,
which is the baseline complexity for probabilistic inference in the Bayesian network.

For the brute force method of enumerating over all feature instances and making an inference call for each one, the complexity will be $\mathcal{O}(n \exp{(\omega + |\X|}))$.
For any node $q$ of any possible f-tree, we have $|\U_q| \le |\X|$, $|\V_q| \le |\X|$, and $|C_q| \le \ftw{q} \le \omega$, so $\omega^\mathcal{T} \le \omega + |\X|$ always holds. 
Later in \cref{sec:results}, we will show a comparison of $\omega^\mathcal{T}$ and $\omega + |\X|$ on some \ac{BNC}s.
As for the size of the compiled NNF, since there are $\exp(|\X|)$ feature instances, and each one can be represented with $\mathcal{O}(|\X|)$ nodes, the number of nodes is bound by $\mathcal{O}(|\X| \cdot \exp{(|\X|)})$.

\section{Experimental Results}
\label{sec:results}

The proposed compilation algorithm is evaluated in the following ways.
We first evaluate the theoretical time complexity by comparing the the compilation width of the proposed algorithm against the brute force method.
Then, we compare the actual runtime of the proposed algorithm against the prior work in~\cite{shih_compiling_2019}, which as far as we are aware, introduced the only existing compilation algorithm for a \ac{BNC} with arbitrary structure.
We will refer to this prior work as the baseline algorithm.
We also examine the compilation time and circuit size for a few very hard problems reported by this prior work.

The five Bayesian networks used to evaluate the baseline algorithm in the prior work are also used in our evaluation:
network \texttt{win95pts} with 76 nodes, 16 leaf variables, 34 roots, and width 9;
network \texttt{cpcs54} with 54 nodes, 12 leaf variables, 13 roots, and width 14;
network \texttt{andes} with 223 nodes, 22 leaf variables, 86 roots, and width 18;
network \texttt{tcc4e} with 98 nodes, 60 leaf variables, 35 roots, and width 10;
network \texttt{emdec6g} has 168 nodes, 117 leaf variables, 46 roots, and width 7.
These widths are approximated by the minfill heuristic for constructing jointrees~\cite{kjaerulff_triangulation_1990}.
The baseline algorithm cannot solve problems with multi-class classifiers so all five networks have binary target variables.
Note that when target $Y$ is binary, a threshold $t$ can be specified, and $\x$ is classified as class $y_i$ iff $P(y_i|\x) > t$.

We created a benchmark of 554 problems to test the algorithms, where each problem corresponds to a network, a set of features, a target variable, and a threshold.
An overview of the benchmark problems is shown in \cref{tab:probStats}.
For networks \texttt{win95pts} and \texttt{cpcs54}, all leaf variables are used as features.
For \texttt{andes}, half of the problems samples 19 out of 22 leaves as features, and the other half uses all leaves.
All problems with \texttt{tcc4e} and \texttt{emdec6g} uses a subset of leaves as features.
With the same network, target variable, and features, there are two problems with different thresholds.
One is set to $0.5$, which is used in prior work.
The other threshold is the posterior $P(y_i | \x)$ averaged over all $\x$ of $\X$, which typically makes the threshold more balanced.

\begin{table}
\centering
\begin{tabular}{lcc}
\toprule
network & \# features & \# unique target $Y$ \\
\midrule
win95pts & 16 & 33 \\
cpcs54 & 13 & 12 \\
andes & 19 & 86 \\
andes & 22 & 86 \\
tcc4e & 27 & 16 \\
tcc4e & 30 & 15 \\
emdec6g & 29 & 29 \\
\bottomrule
\end{tabular}
\caption{Overview of the 554 benchmark problems. With the same network, target, and features, there are two problems with different thresholds.}
\label{tab:probStats}
\end{table}

We first conducted an experiment to compare the time complexity of the proposed algorithm against the brute force method.
We chose the three networks with problem settings that use all leaves as features.
The average compilation width $\omega^\mathcal{T}$ for the three networks are shown in \cref{tab:width}.
For all three networks, the differences between the two widths ($\omega^\mathcal{T}$ and $|\X| + \omega$) are significant, since time complexity is exponential with respect to the width.

\begin{table}
\centering
\begin{tabular}{lcccc}
\hline
network & avg $\omega^\mathcal{T}$ & width $\omega$ & $|\X|$ & $|\X| + \omega$\\
\hline
win95 & 17.1 & 9 & 16 & 25\\
cpcs54 & 18.1 & 14 & 13 & 27\\
andes & 27.7 & 18 & 22 & 40\\
\hline
\end{tabular}
\caption{Compilation width compared to brute force method. For all networks, all leaves are chosen as features, and $|\X|$ denotes the number of features. 
For each network, the compilation width $\omega^\mathcal{T}$ is averaged over the the set of targets used in the benchmark. 
Column $|\X| + \omega$ shows the width for brute force method. }
\label{tab:width}
\end{table}

Next, we ran the benchmark using the released Java implementation of the baseline algorithm,\footnote{\url{https://github.com/AndyShih12/BNC_SDD}} and our Python implementation of the proposed algorithm.\footnote{\url{https://github.com/victorzhangyf/bnc2d}}
The benchmark is ran with an Intel Xeon E5-2670 CPU, and each problem has a timeout limit of 4 hours.
Out of a total of 554 problems, the baseline algorithm completed 166 problems, and the proposed algorithm completed 492 problems.
\cref{fig:cactusTime} shows the runtime performance of the two algorithms. 
The cactus plot is created by first sorting the runtime of the completed problems by an algorithm in an increasing order, and then the cumulative runtime is computed and plotted.
The plot shows that the proposed algorithm completes the same number of problems (e.g., 100) with much less time.
Also, as the problems get more difficult, the cumulative time of the baseline algorithm grows at a much steeper rate compared to the proposed algorithm. 

\begin{figure}
\centering
\includegraphics[scale=0.5]{"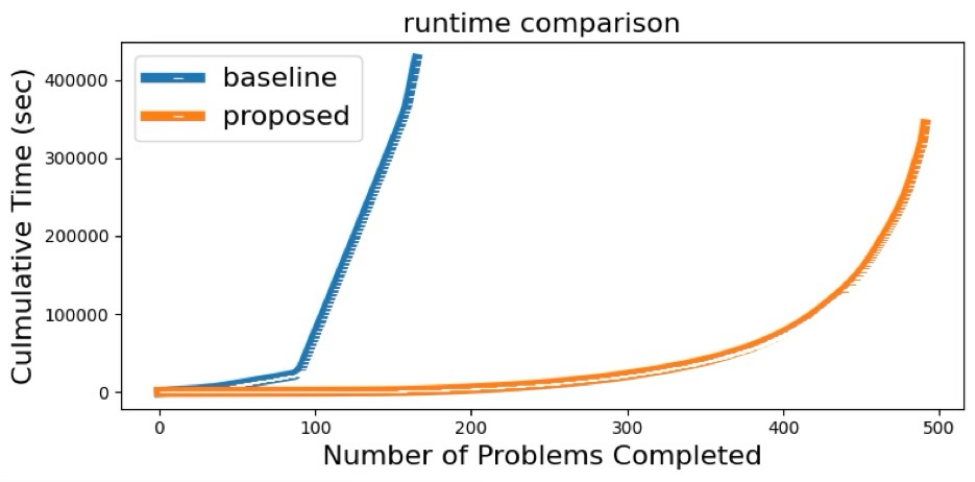"}%
\caption{Cactus plot of runtime.}%
\label{fig:cactusTime}
\end{figure}

Finally, we will compare the runtime and circuit size for the four problems in the \texttt{andes} network that are reported in~\cite{shih_compiling_2019}. 
A comparison is shown in \cref{tab:andesCompare}.
The proposed algorithm solves these problems significantly faster.
The sizes of the compiled circuits are quite comparable.
Since the baseline algorithm takes quite long to solve these relatively hard problems, it is difficult to collect data for a more comprehensive size comparison.
It is worth noting that, for both algorithms, the circuit sizes here are significantly smaller than the number of feature instances. 
For example, for CNBG, the target class includes 1,719,892 instances, out of a total of 4,194,304 possible instances (from 22 binary features).

\begin{table}
\centering
\begin{tabular}{l|cc|cc}
\hline
& \multicolumn{2}{c|}{Time (s)} & \multicolumn{2}{c}{Size (\# Nodes)}\\
Y & baseline & proposed  & baseline & proposed \\
\hline
TK & 11,708 & 12  & 47 & 5 \\
CNBG & 24,374 & 30 & 2,893 & 2,453\\
MDA & 26,614 & 1936  & 5,454 & 2,894\\
VKE & 27,495 & 262  & 2,107 & 1,725\\
\hline
\end{tabular}
\caption{Time and size comparison for the 4 roots ($Y$) and all leaves in the \texttt{andes} network. Threshold is $0.5$ for all problems.}
\label{tab:andesCompare}
\end{table}

\section{Explanations in a Multi-Class \ac{BNC}}
\label{sec:multiclass}

We will next illustrate how a decision of a multi-class BNC can be explained using the output of the compilation algorithm.
Consider a Bayesian network (\Cref{fig:discreteExplainExample}) about diagnosing the type of some disease.
The root variable (D) represents the disease type, and it is a discrete variable with 3 possible values $\{0, 1, 2\}$.
The two binary variables $X$ and $Y$ each represent the presence of some condition caused by the disease.
The leaf variable $\CT$ is binary, and represents whether the result of some CT scan is $\lnot \CT$ (negative) or $\CT$ (positive).
The leaf variables heart rate ($\HR$) and blood pressure ($\BP$) are both discrete, and each has three possible values $\{\vLow, \vNormal, \vHigh\}$.
This \ac{BNC} has in total 18 possible instances, and 3 possible classes.
A literal of a discrete variable can be generally denoted as a set, e.g., $\HR \in \{\vLow, \vNormal\}$ restricts the variable $\HR$ to be either $\vLow$ or $\vNormal$.
When that set contains just one state (say, $\vLow$), we will simply denote it as $\HR=\vLow$.

Suppose a patient (instance $\I$) has high blood pressure, a positive CT scan, and normal heart rate; i.e., $\I = [(\BP=\vHigh) \land \CT \land (\HR=\vNormal)]$, and the patient is diagnosed (decided) to have disease type 2. 
The class formula of type 2 ($\Delta_2$) is represented by the OR-decomposable NNF shown in \Cref{fig:nnfType2}.
We may now ask: What is the reason behind this decision?
\begin{figure}
\centering
\begin{subfigure}{0.48\columnwidth}
\digraph[scale=0.35]{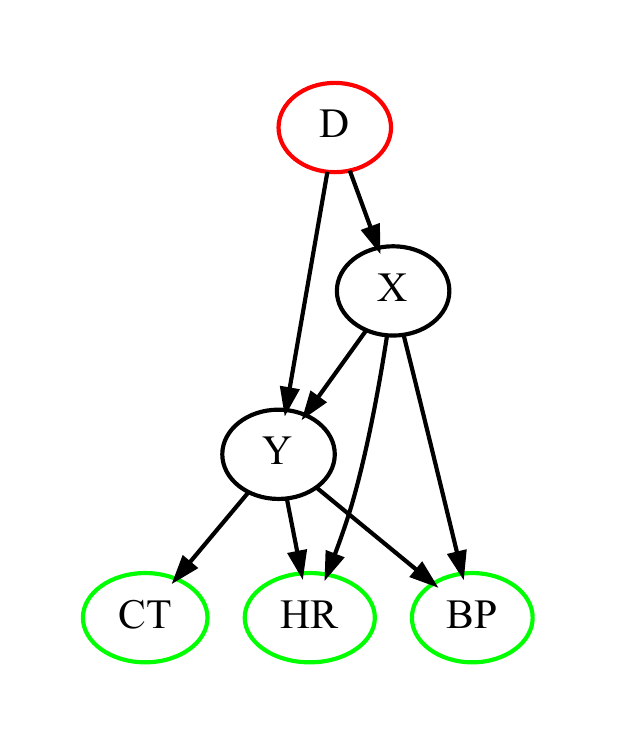}{
"D" [color=red fontsize=20 penwidth=2.0];
"X" [fontsize=20 penwidth=2.0];
"Y" [fontsize=20 penwidth=2.0];
"CT" [color=green fontsize=20 penwidth=2.0];
"BP" [color=green fontsize=20 penwidth=2.0];
"HR" [color=green fontsize=20 penwidth=2.0];
"D" -> "X" [style=solid penwidth=2.0] 
"D" -> "Y" [style=solid penwidth=2.0]
"X" -> "Y" [style=solid penwidth=2.0]
"X" -> "BP" [style=solid penwidth=2.0]
"X" -> "HR" [style=solid penwidth=2.0]
"Y" -> "CT" [style=solid penwidth=2.0]
"Y" -> "BP" [style=solid penwidth=2.0]
"Y" -> "HR" [style=solid penwidth=2.0]
}
\caption{Network structure. }
\label{fig:discreteExplainExample}
\end{subfigure}%
\begin{subfigure}{0.48\columnwidth}
\centering
\digraph[scale=0.25]{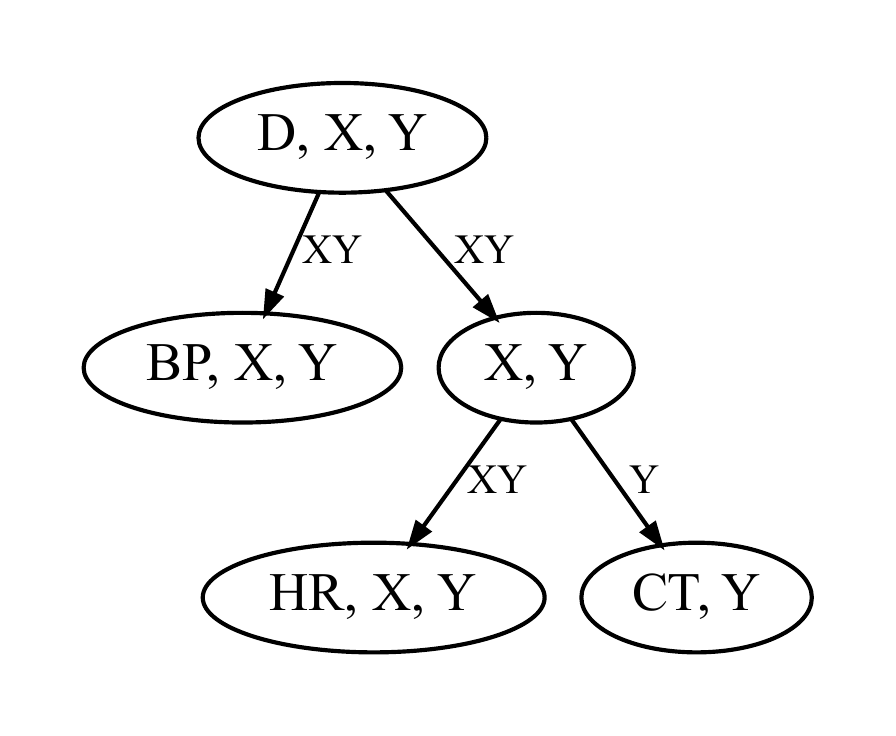}{
    fontsize=30
    DXY [label="D, X, Y" fontsize=26 penwidth=2]
    BPXY [label="BP, X, Y" fontsize=26 penwidth=2]
	XY [label="X, Y" fontsize=18 fontsize=26 penwidth=2]
    HRXY [label="HR, X, Y" fontsize=26 penwidth=2]
	CTY [label="CT, Y" fontsize=26 penwidth=2]

	DXY -> BPXY [label="XY" color=Black fontsize=20 penwidth=2]
	DXY -> XY [label="XY" color=Black fontsize=20 penwidth=2]
	XY -> HRXY [label="XY" color=Black fontsize=20 penwidth=2]
	XY -> CTY [label="Y" color=Black fontsize=20 penwidth=2]
}
\caption{An f-tree of the network.}
\label{fig:ftree2}
\end{subfigure}
\caption{The network structure and f-tree of a \ac{BNC} about the diagnosis of disease. Note that the f-tree shown in (b) is also a jointree of the network.}
\end{figure}

A reason for a decision can be viewed as a condition (i.e., a logical formula) that must be satisfied by the instance being decided, and this condition must also imply the class formula of the decided class~\cite{lics/Darwiche23}.
While there are generally many logical formulas that qualify as reasons for a decision, some reasons can be more informative than others. For example, for any decision, the instance itself (i.e., a term) and the class formula both technically qualify as reasons, but they are generally not informative.

\begin{figure}
\centering
\digraph[scale=0.3]{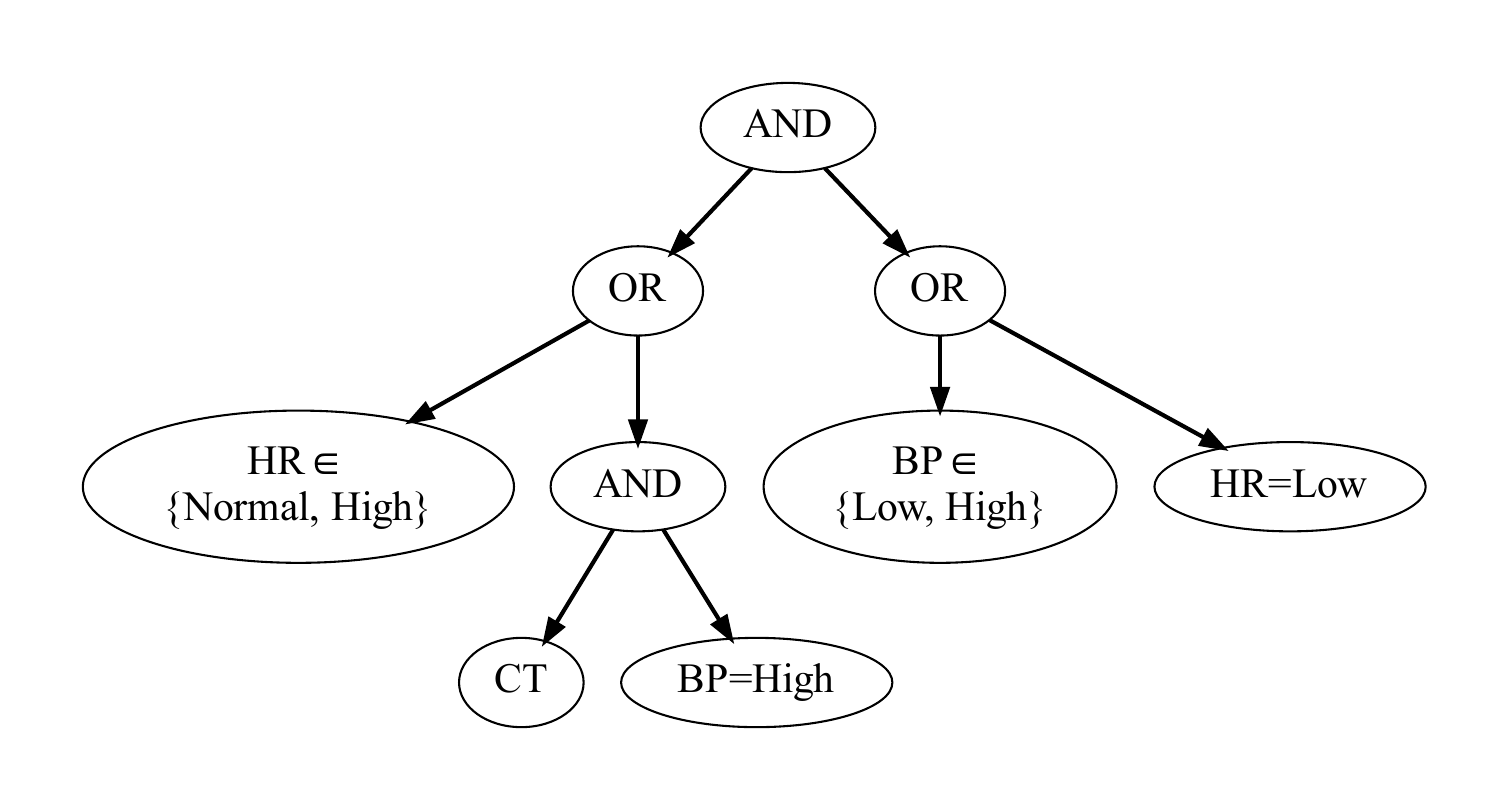}{
	1 [label=AND fontsize=20]
	1 -> 2 [style=solid penwidth=2.0]
	1 -> 7 [style=solid penwidth=2.0]
	2 [label=OR fontsize=20]
	2 -> 3 [style=solid penwidth=2.0]
	2 -> 4 [style=solid penwidth=2.0]
	3 [label="HR∈
    {Normal, High}" fontsize=20]
	4 [label="AND" fontsize=20]
    4 -> 5 [style=solid penwidth=2.0]
	4 -> 6 [style=solid penwidth=2.0]
	5 [label="CT" fontsize=20]
	6 [label="BP=High" fontsize=20]
    7 [label=OR fontsize=20]
	7 -> 8 [style=solid penwidth=2.0]
	7 -> 9 [style=solid penwidth=2.0]
	8 [label="BP∈
    {Low, High}" fontsize=20]
	9 [label="HR=Low" fontsize=20]
}
\caption{An OR-decomposable NNF that represents the class formula of disease type 2.}
\label{fig:nnfType2}
\end{figure}

\subsection{Complete Reasons}
Among the possible reasons for a decision, the \textit{complete reason} is the weakest NNF whose literals appear in the instance being explained~\cite{ecai/DarwicheH20,lics/Darwiche23}.
Since an instance sets each variable to a particular state, each literal that appears in an instance (and by extension in a complete reason) must only contain a single state.
The complete reason can be computed through the \textit{universal literal quantification} operator~\cite{jair/DarwicheM21,darwiche_computation_2022}, which can be evaluated in linear time over an OR-decomposable NNF.
In our example, the complete reason of this patient diagnosed with disease type 2 is

\begin{equation}
    (\BP = \vHigh) \land 
    [(\HR = \vNormal) \lor \CT] 
\label{eqn:cr}
\end{equation}

We could easily verify that every instance that satisfies \cref{eqn:cr} also satisfies the class formula in \cref{fig:nnfType2}.
One usually computes the prime implicants and implicates of the complete reason as these correspond to the notions of sufficient and necessary reasons (SRs and NRs) for the decision, which are very widely used as explanations. 
We will revisit these notions in a more general context later in the section.

We might also observe that if we relax $(\HR = \vNormal)$ to $(\HR \in \{\vNormal, \vHigh\})$ in~\cref{eqn:cr}, every instance that satisfies the resulting (weaker) formula would still belong to type 2, which begs the question:
Would the weaker formula be a better justification for this decision?
As we will show next, with discrete features, by relaxing the restriction posed on the literals in the NNF, we do indeed get an abstraction of the instance that can be more informative.

\subsection{General Reasons}

Among the possible reasons for a decision, the \textit{general reason} is the weakest NNF whose literals are implied by the instance~\cite{ji_new_2023}.
Note that the literal $(\HR \in \{\vNormal, \vHigh\})$ does not appear in the instance, but is implied by the literal $(\HR = \vNormal)$ in the instance.
The general reason of a decision can be computed using the \textit{selection} operator \cite{ji_new_2023}, which also can be evaluated in linear time on an OR-decomposable NNF. 
In our example, the general reason of the patient diagnosed with disease type 2 is: 
\begin{multline*}
(\BP \in \{\vLow, \vHigh\}) \land
[(\HR \in \{\vNormal, \vHigh\}) \lor (\CT \land (\BP = \vHigh))]
\end{multline*}

The general reason can be further used to compute \acfp{GSR} and \acfp{GNR} of the decision.
The \acp{GSR} of a decision are the \textit{variable-minimal} prime implicants of the general reason~\cite{ji_new_2023}.
The two prime implicants of the general reason are 
\begin{align*}
&(\BP \in \{\vLow, \vHigh\}) \land (\HR \in \{\vNormal, \vHigh\})\\
&(\BP = \vHigh) \land \CT
\end{align*}
Both of them are variable-minimal, and are thus \acp{GSR}.
GSRs can be viewed as minimal aspects of the instance which are sufficient
to trigger the decision on that instance.

The \acp{GNR} of a decision are the \textit{variable-minimal} prime implicates of the general reason~\cite{ji_new_2023}.
Each GNR points to a minimal way to modify the instance such that the current decision would no longer hold:
modifying the instance in \textit{any} way that violates a \ac{GNR} would always lead to a different decision.\footnote{In the presence of discrete features, a GNR can be violated in multiple ways, and for every one of those ways, the decision is guaranteed to be changed. We will show this in a later example.}
The three prime implicates of the above general reason are 
\begin{align}
&(\BP \in \{\vLow, \vHigh\}) \label{eqn:GNR2_BP} \\
&[\CT \lor (\HR \in \{\vNormal, \vHigh\})] \label{eqn:GNR2_CT_HR} \\
&[(\BP = \vHigh) \lor (\HR \in \{\vNormal, \vHigh\})] \label{eqn:IP2_BP_HR} 
\end{align}

The first two are variable-minimal, but the last one is not; 
thus, the first two are the \acp{GNR} of the decision.
Violating the GNR in~\cref{eqn:GNR2_BP} requires changing $\BP$ to be $\vNormal$;
violating the one in~\cref{eqn:GNR2_CT_HR} results in changing the values of $\CT$ and $\HR$ to $\lnot \CT$ and $\HR = \vLow$.
One could verify that neither modified instances satisfy $\Delta_2$ (\cref{fig:nnfType2}).

It is worth noting that, in the presence of discrete features, GSRs and GNRs (computed using general reasons) can be more informative than SRs and NRs (computed using complete reasons); see~\cite{ji_new_2023} for details.

\subsection{Contrastive Explanations}

Recall that the \ac{BNC} in our example has three possible classes (disease types 0, 1, 2). 
In a multi-class classifier like this, we may be interested in a more targeted kind of queries; 
for example, why is this patient \textit{not} diagnosed to have disease type 0?
In other words, what is the minimal change to the patient's test results such that the diagnosis is changed from type 2 to type 0?
Here, we are interested in not only changing the current decision, but also targeting the change towards a particular class.\footnote{This distinction does not exist in binary classifiers.}
The answer to such a query is known as a \textit{contrastive explanation}~\cite{Lipton_contrastive_1990}.

\begin{figure}
\centering
\digraph[scale=0.245]{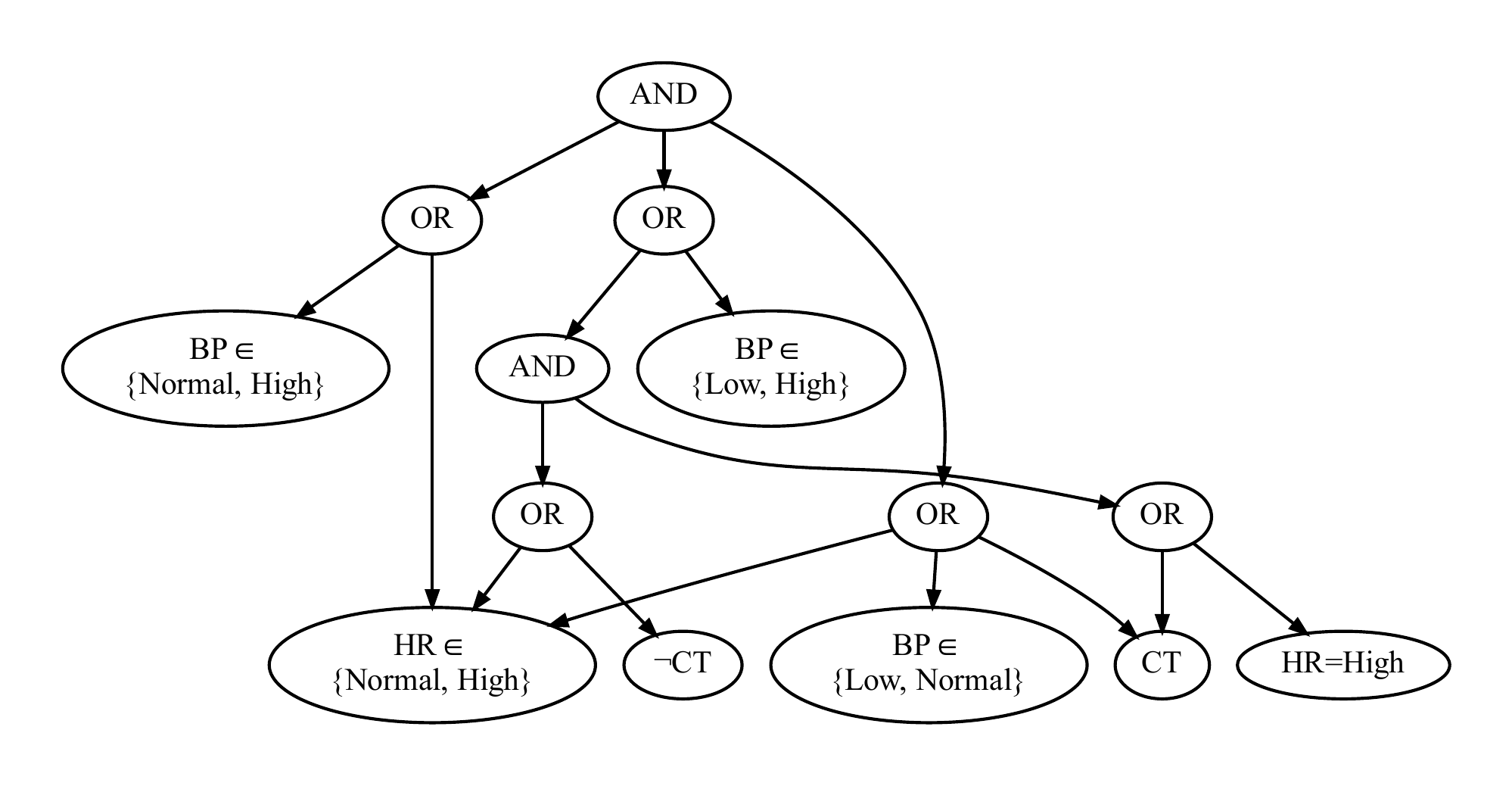}{
	18 [label=AND penwidth=2.0 fontsize=20]
	18 -> 12 [style=solid penwidth=2.0]
	12 [label=OR penwidth=2.0 fontsize=20]
	12 -> 2 [style=solid penwidth=2.0]
	2 [label="BP∈
    {Normal, High}" penwidth=2.0 fontsize=20]
	12 -> 7 [style=solid penwidth=2.0]
	7 [label="HR∈
    {Normal, High}" penwidth=2.0 fontsize=20]
	18 -> 16 [style=solid penwidth=2.0]
	16 [label=OR penwidth=2.0 fontsize=20]
	16 -> 3 [style=solid penwidth=2.0]
	3 [label="BP∈
    {Low, High}" penwidth=2.0 fontsize=20]
	16 -> 15 [style=solid penwidth=2.0]
	15 [label=AND penwidth=2.0 fontsize=20]
	15 -> 11 [style=solid penwidth=2.0]
	11 [label=OR penwidth=2.0 fontsize=20]
	11 -> 6 [style=solid penwidth=2.0]
	6 [label="¬CT" penwidth=2.0 fontsize=20]
	11 -> 7 [style=solid penwidth=2.0]
	15 -> 14 [style=solid penwidth=2.0]
	14 [label=OR penwidth=2.0 fontsize=20]
	14 -> 5 [style=solid penwidth=2.0]
	5 [label="CT" penwidth=2.0 fontsize=20]
	14 -> 13 [style=solid penwidth=2.0]
	13 [label="HR=High" penwidth=2.0 fontsize=20]
	18 -> 17 [style=solid penwidth=2.0]
	17 [label=OR penwidth=2.0 fontsize=20]
	17 -> 4 [style=solid penwidth=2.0]
    4 [label="BP∈
    {Low, Normal}" penwidth=2.0 fontsize=20]
	17 -> 5 [style=solid penwidth=2.0]
	17 -> 7 [style=solid penwidth=2.0]
}
\caption{An OR-decomposable NNF representing $\lnot \Delta_0$.}
\label{fig:nnfTypeNot0}
\end{figure}

Using the approach proposed by \cite{darwiche_computation_2022}, the contrastive explanations in this example can be viewed as the \acp{GNR} for a new class ``not type 0'', which is represented by the class formula $\Delta_{\lnot 0}$, where $\Delta_{\lnot 0} = \Delta_1 \lor \Delta_2 = \lnot \Delta_0$.
By definition, the instance $\I$ must belong to this new class (i.e., $\I$ implies $\Delta_{\lnot 0})$, and any modification to $\I$ that kicks $\I$ out of class ``not type 0'' would change the decision to type 0.\footnote{Computing formula $\Delta_{\lnot i} = \bigvee_{j \neq i} \Delta_j = \lnot \Delta_i$ can be achieved with minor modifications. \cref{alg:decideNeg} can be modified to \Call{decide-positive}{} by changing \cref{line:uNeg} to return \false, and \cref{line:uPos} to return \true. 
\cref{alg:compile_classifier} can be changed to \Call{compile-positive}{} by flipping the condition on \cref{line:leafArgmax} to
$\arg\max \psi$ is $y_i$.}

The OR-decomposable NNF shown in \cref{fig:nnfTypeNot0} represents the class formula $\Delta_{\lnot 0}$.
The general reason for \textit{not} decided as type 0 can be written in the prime implicate form as,

\begin{align*}
&[\CT \lor (\HR \in \{\vNormal, \vHigh\})] \land \\ 
&[(\BP = \vHigh) \lor (\HR \in \{\vNormal, \vHigh\})] \land \\
&[(\BP \in \{\vLow, \vHigh\}) \lor \CT] 
\end{align*}

The three prime implicates are all variable-minimal so they are all GNRs in this context.
If the instance is modified in any way that violates one of these GNRs, the diagnosis would change from type 2 to 0.
For example, to violate $[(\BP = \vHigh) \lor (\HR \in \{\vNormal, \vHigh\})]$, $\HR$ must be $\vLow$, and $\BP$ could be either $\vLow$ or $\vNormal$; thus, there are two ways to violate this GNR.
One could verify that neither modified instances satisfy $\lnot \Delta_0$ (\cref{fig:nnfTypeNot0}); that is, both modified instances are decided to be type 0.

\section{Conclusion}
\label{sec:conclusion}
We proposed an algorithm for compiling a multi-class \acf{BNC} into tractable circuits.
In particular, the class formula of each class is compiled into an OR-decomposable NNF circuit.
These class formulas (circuits) can be used to efficiently compute complete and general reasons for decisions of the \ac{BNC}, which form the basis for computing further explanations. 
Compared to prior work, the proposed algorithm can address multi-class \acp{BNC}, and our experimental results suggest that the algorithm has significant time improvement.

\begin{credits}
\subsubsection{\ackname} 
This work has been partially supported by ARL grant\\
\#W911NF2510095.

\subsubsection{\discintname}
The authors have no competing interests to declare that are relevant to the content of this article.
\end{credits}

\bibliography{references}

\end{document}